\documentclass[10pt]{article} 
\usepackage[preprint]{tmlr}


\usepackage{amsmath,amsfonts,bm}









\def\eqref#1{equation~\ref{#1}}









\def\1{\bm{1}}










\DeclareMathAlphabet{\mathsfit}{\encodingdefault}{\sfdefault}{m}{sl}
\SetMathAlphabet{\mathsfit}{bold}{\encodingdefault}{\sfdefault}{bx}{n}













\usepackage{hyperref}
\usepackage{url}

\usepackage{verbatim}
\usepackage{graphicx}
\usepackage{amssymb}
\usepackage{amsthm}
\usepackage{enumitem}
\usepackage{multirow}

\title{Elementwise Language Representation}


\author{\name Dunam Kim \email timecostslives@gmail.com \\
      \addr Independent researcher
      \AND
      \name Jeeeun Kim \email topdogje@naver.com \\
      \addr Pohang University of Science and Technology}



\newtheorem*{theorem*}{Assumption}
\newtheorem*{prop}{Proposition}

\begin{document}

\maketitle

\begin{abstract}
We propose a new technique for computational language representation called elementwise embedding,
in which a material (semantic unit) is abstracted into a horizontal concatenation of lower-dimensional element
(character) embeddings. While elements are always characters, materials are arbitrary levels of semantic
units so it generalizes to any type of tokenization. To focus only on the important letters, the $n^{th}$
spellings of each semantic unit are aligned in $n^{th}$ attention heads, then concatenated back
into original forms creating unique embedding representations; they are jointly projected thereby determining own contextual importance.
Technically, this framework is achieved by passing a sequence of materials, each consists of $v$ elements,
to a transformer having $h=v$ attention heads. As a pure embedding technique, elementwise embedding
replaces the $w$-dimensional embedding table of a transformer model with $256$ $c$-dimensional elements 
(each corresponding to one of UTF-8 bytes) where $c=w/v$. Using this novel approach, we show that the 
standard transformer architecture can be reused for all levels of language representations and be able to 
process much longer sequences at the same time-complexity without "any" architectural modification and additional overhead.
BERT trained with elementwise embedding outperforms its subword equivalence 
(original implementation) in multilabel patent document classification exhibiting superior robustness to domain-specificity
and data imbalance, despite using $0.005\%$ of embedding parameters.
Experiments demonstrate the generalizability of the proposed method by successfully transferring these enhancements to
differently architected transformers CANINE and ALBERT.
\end{abstract} 

\section{Introduction}
We understand texts from various levels of semantics but current language representation strategies leverage
tokenization which relies on a certain level of semantics exclusively, fully ignoring the hierarchical structures of natural languages. 
Text is encoded to a sequence of integers then projected into fixed-size latent embeddings.
These types of expressions result in a recursive trade-off between different levels of language representations: (sub)word-level models
indirectly recover characters \citep{itzhak2021models} but it is not always sufficient for spelling-sensitive tasks,
character-level models need much longer sequences to reach comparable performance to word-level models thus amplifying the
computational complexity of self-attention. Some recently proposed studies \citep{clark2022canine,godey2022manta,tay2021charformer}
attempt to solve this by downsampling long character sequences into an acceptable length, however, they share the same limitation as pure
character-level models because their valid downsampling rates are constrained to relatively small values mainly due to the smoothing and overhead issues.

Instead, we propose elementwise embedding, a language representation technique for addressing this trade-off in which a set of lower-dimensional 
character embeddings called \emph{elements} are horizontally concatenated into a single latent embedding called \emph{material} that mimics a semantic unit such as 
a word, phrase, sentence and etc. Using this method, models with higher-dimensional hidden representations create each semantic unit (i.e., a material) by concatenating a greater numbers of characters (i.e., elements), which implies that larger models can process longer sequences than smaller ones at the same computational complexity. This means that the acceptable sequence length scales with the size of a transformer model, but the complexity is fixed as that of its attention. Assuming that a character-level GPT-3 [processing 2048 12,288-dimensional token embeddings with 96 attention heads; \citet{brown2020language}] is trained with elementwise embedding, it aligns a sequence of $2,048\times96=296,608$ characters which is 96x longer at the same $O(N\sqrt{N})_{N=2048}$ complexity. 

The proposed methodology follows the two-step framework of \emph{"reshape, then focus"}. First, the given text is encoded as a sequence of $uv$ UTF-8 bytes and projected into a $(uv, c)$ embedding matrix in which each row is a $c$-dimensional element; it's \emph{"reshaped"} into a $(u, w)$ embedding matrix in which each row is a $w$-dimensional material (e.g., a word), where $c=w/v$. As a result, one material consists of $v$ elements so that we can align $uv$ elements at the $O(u^{2})$ complexity using multihead self-attention \citep{vaswani2017attention} with $v$ attention heads. Each $i^{th}$ column of this $(u, v)$ material matrix is a sequence of the $i^{th}$ elements of all $u$ materials, so $i^{th}$ attention head aligns $i^{th}$ elements. This operation is most straightforward when a material is a $v$ letters word: $i^{th}$ spellings of all $u$ words are aligned in $i^{th}$ attention head, then concatenated back creating unique embedding representations where $i\in[1,v]$. Each attended $i^{th}$ spelling is referred as \emph{focus} because it is quite similar to that we often read text inferring the meanings of words by \emph{"focusing"} on a few important letters. The contextual importance of each word is determined jointly via linear transformation. Theoretically, this can be understood as lowering the entropy of character sequences concentrating distributed probabilities into several important spellings. Technically, it is just to pass a $(u, w)$ word embedding matrix in which each row is a horizontal concatenation of $v$ $c$-dimensional character embeddings as input to a transformer model with $w$-dimensional hidden layers.
It's identical to aligning words using character-level semantics and vice versa.

In practical implementation, focus is performed by multihead attention of the parent (any transformer model) by setting the number of attention heads to $h=v$, so applying elementwise embedding is simply to replace the embedding table of parent model with a set of 256 $c$-dimensional character embeddings (each mapping to one of UTF-8 bytes; elements) and a following tensor reshaping operation. Neither structural modification of neural networks nor additional operations such as up/downsampling that entail unnecessary engineering efforts and overheads are required. Fig \ref{fig1} offers an intuitive visualization of elementwise embedding.

\begin{figure}[t!]
\begin{center}
\includegraphics[width=1.0\linewidth]{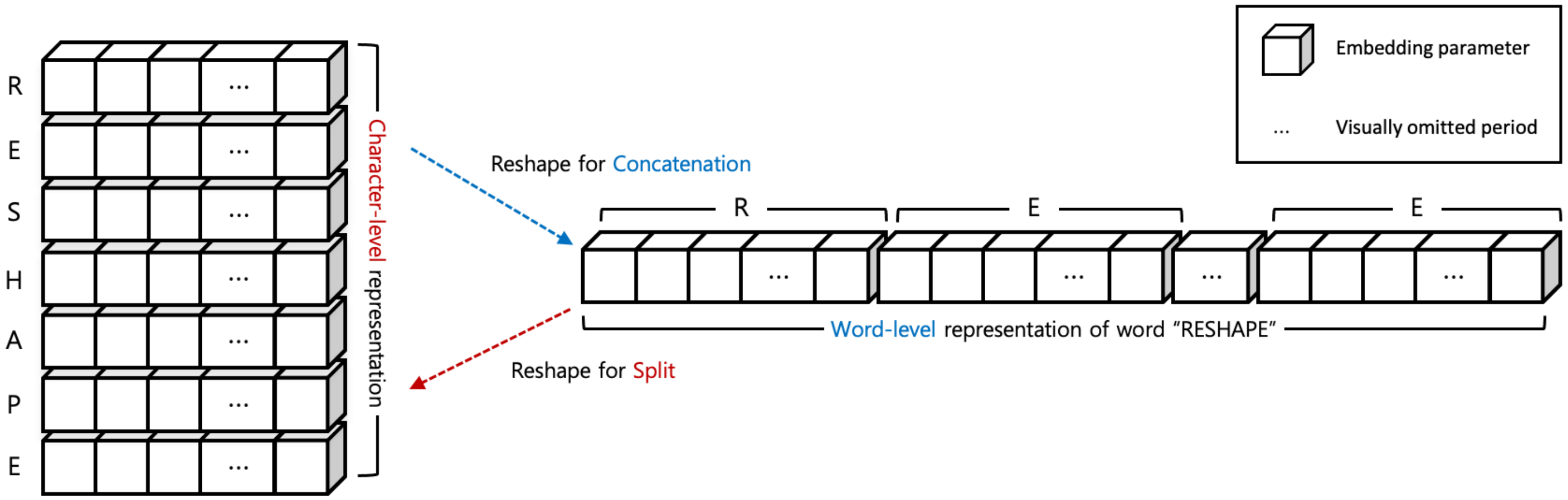}
\end{center}
\caption{Visualization of the proposed elementwise language representation. A material (word "RESHAPE" here)
is abstracted into a horizontal concatenation of elements (spellings [R, E, S, H, A, P, E] here).}
\label{fig1}
\end{figure}

\section{Research Objectives}
In this study, we suggest the new elementwise language representation and demonstrate its validity.

Theoretically, we propose the first generalized language representation:
\begin{itemize}[topsep=0pt,noitemsep]
\item applying with all levels of tokenization strategies
\item aligning longer sequences proportional to the model's size
\item based on information theory rather than the distributional hypothesis
\end{itemize}

Empirically, we demonstrate the practical contributions of the proposed methodology by:

\begin{itemize}[topsep=0pt,noitemsep]
\item reusing BERT \citep{devlin2018bert} for various levels of language representation
\item improving BERT to be more robust to domain-specific and imbalanced training examples
\item improving BERT to process longer sequences at the same $O(N^2)$ computational complexity
\end{itemize}
without any architectural modification and additional overhead. 

We generalize these contributions to different transformers CANINE \citep{clark2022canine} and ALBERT \citep{lan2019albert}. Through experiments, we validate the clear superiority in robustness to domain-specificity and dataset imbalance of the proposed method by comparing transformers, trained with elementwise embedding from scratch without pretraining, with their original implementations for multilabel patent classification.

This is the first part of our two-paper study discussing:
\begin{enumerate}[topsep=0pt,noitemsep]
\item Theoretical and practical advantages of elementwise language representation
\item Unsupervised language modeling strategy for elementwise language representation
\end{enumerate}
Though these were originally topics to be covered in one paper, we divided them into two parts mainly due to the lack of computational resources for pretraining and demonstrating language models in various
scales at the time of this study.

\begin{figure}[h]
\begin{center}
\includegraphics[width=1.0\linewidth]{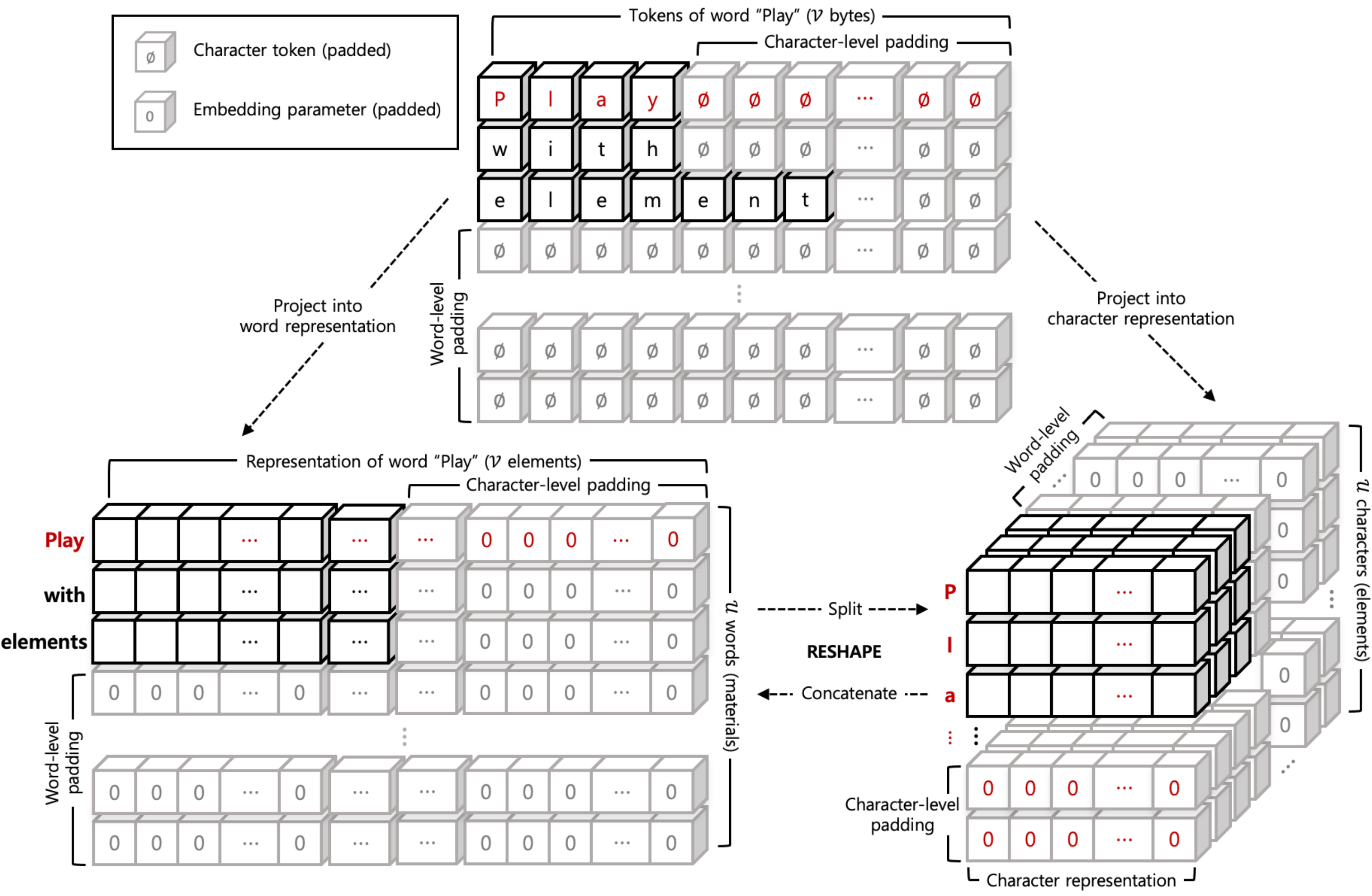}
\end{center}
\caption{Overall framework of elementwise language representation applied with whitespace tokenization. First, the given text is tokenized into a sequence of $u$ words based on whitespaces.
Each word is encoded to a sequence of $v$ UTF-8 bytes resulting in a $uv$ bytes sequence; each byte is 4 greater than its original value for reserving 4 special tokens [CLS], [SEP], [PAD] and [MASK]; the token [MASK] is reserved for unsupervised pretraining with elementwise embedding to be introduced in our follow-up study. Words shorter than $v$ are padded with integer zeros, longer ones are truncated. Sequences shorter than $u$ are padded with embeddings filled with $v$ zeros. $uv$ bytes are projected into a $(uv, c)$ character embedding matrix, and then reshaped into a $(u, w)$ word embedding matrix in which each row comprises of horizontally concatenated $v$ $c$-dimensional character embeddings, where $c=w/v$. Transformation from (sub)word-level to character-level representation and vice versa are always possible via reshaping operation. This framework can be generalized to all kinds of tokenization; just split text into $u$ tokens, encode each token to $v$ bytes, project, then reshape.}
\label{fig2}
\end{figure}

\section{Related Work}
\subsection{Character-level Models}
Most of the past and current state-of-the-arts and impactful studies in the field of natural language processing \citep{devlin2018bert,radford2019language,lan2019albert,yang2019xlnet,brown2020language,clark2020electra,raffel2020exploring,reed2022generalist,taylor2022galactica} relies on subword-level tokenization \citep{sennrich2015neural,wu2016google,kudo2018sentencepiece}. These pervasive choices are due to the reasonable trade-off between robustness and efficiency of subword tokenization, but their limitations in several special environments wherein the data is domain-specific and/or its distribution shifts frequently are problems that have to be addressed at some point. While some subsequent studies have suggested improved techniques for subword-level tokenization \citep{provilkov2019bpe,he2020dynamic,hiraoka2021joint,wang2021multi}, many of them involve significant increases in computational costs and engineering efforts.

Character-level modeling has long been proposed as a promising alternative to the (sub)word-level language representations. Although the chronic long-range dependence issues of pure character-level models \citep{sutskever2011generating,graves2013generating,zhang2015character} solved by the adoption of the transformer architecture \citep{vaswani2017attention} as demonstrated in
\citep{belouadi2022bygpt5,xue2022byt5}, the quadratic time-complexity of self-attention became a new bottleneck for character-level language representation. Some recently proposed studies try to address this problem by downsampling the long character sequences to an acceptable length: \citet{tay2021charformer} and \citet{clark2022canine} utilize convolutional and non-parametric mean-pooling respectively, \citet{godey2022manta} leverages non-parametric max-pooling. All of them requires additional computations for explicit downsampling and character-level enrichment, thus causing corresponding overheads. The proposed elementwise embedding achieves the similar using a trivial tensor reshaping operation so does not degrades the inference speed of its backbone transformer architecture while processing much longer sequences.

\subsection{Efficient Transformers}
The major challenge of the transformer architecture \citet{vaswani2017attention} is to mitigate its quadratic self-attention complexity. \citet{liu2018generating} proposed to enhance this complexity by computing attentions within partitioned embedding matrices. \citet{child2019generating} estimated the full attention by mixing a sparse number of local attentions. \citet{beltagy2020longformer} extended this idea using technique called dilated sliding window to cover a wider range of attention. \citet{zaheer2020big} achieved the similar by attending three different types of attentions: global, windowed and random. \citet{kitaev2020reformer} alleviated the memory complexity by applying locality sensitive hashing and reversible residual layers \citet{gomez2017reversible}. \citet{wang2020linformer} improved overall attention complexity to be linear time by performing dimensionality reduction on the length axis, \citet{katharopoulos2020transformers} enhanced the computational complexity to linear time using kernel-based formulation and causal masking. 

Approaches through other structural modifications have also been proposed. \citet{lan2019albert} reduced the size of its BERT \citep{devlin2018bert} backbone extremely smaller by sharing weight parameters between every attention layer. The concept of knowledge distillation \citep{hinton2015distilling} has been demonstrated to be useful by \citep{sanh2019distilbert,tang2019distilling,jiao2019tinybert}.
Trials for improving the inference-time efficiency in which unimportant attention heads are pruned away \citep{voita2019analyzing,michel2019sixteen}; or "blocks" are pruned instead \citep{lagunas2021block}; were made. Some recent studies proposed to enhance computational complexity by downsampling input sequences to an acceptable length \citep{godey2022manta,tay2021charformer}. Elementwise embedding is quite similar to these approaches in terms of increasing the efficiency of transformer architecture, but is fundamentally different in that it does not require any downsampling and architectural modification of the transformer model to work with. What it does is that simply projects $uv$ bytes sequence into a $(uv, c)$ character embedding matrix, reshapes it into a $(u, w=vc)$ embedding matrix, and pass it to a transformer as input features; thus making it to align $uv$ sequences at the $O(u^{2})$ complexity. As elementwise embedding is designed as a pure embedding technique that does not modify any part of the transformer architecture, it can be potentially utilized with all the above methods in conjunction.

\subsection{Patent Classification}
\label{sec3.3}
Patent classification is an interesting subfield of text classification that targets to automize the categorization of patent documents.
Although this research field has not been actively studied compared to other compelling applied-ml areas like medicine and robotics, it deserves attention because it is essential for the data-driven patent analysis \citep{lee2009business,kim2017forecasting}. Patents filed during a specific period are often closely associated with technological trends at that time which implies a big flow of capital, and modern machine learning methods as large language models \citep{brown2020language,taylor2022galactica} and graph-powered neural networks \citep{sanchez2020learning,jumper2021highly,stokes2020deep} are powerful enough for extracting meaningful patterns from those textual/tabular data to lead high-impact decision makings.

In addition to the purpose of patent analysis, patents are also valuable for general-purpose machine learning research. Patent documents are large amounts of multi-modal data that consist of texts, graphs, images and their organized structure makes it easier to preprocess than dealing with raw data such as randomly crawled web corpora. Furthermore, patent data can be used to benchmark learning
algorithms because its extremely imbalanced distribution and a wide range of domain-specific lexicons hinder the models from convergence. This is the main reason that we utilize patent classification for
evaluating improved robustness of our models trained with the proposed elementwise embedding.

The technical requirement of patent classification is largely threefold. First, the classifier should be possible to capture the unique properties (i.e., the classification symbols) of each patent that is
computed by Precision and must be able to distinguish between different patent documents (i.e., the difference between classification symbols of two separate patent documents) that is calculated by Recall. Second, the classifier has to do well on both Precision and Recall, and every classification symbol should have equal importance (all technical categories are potentially important even if it is not currently popular). The former can be achieved by F\textsubscript{1} measure which is the harmonic mean of Precision and Recall, and the latter is satisfied by computing the micro-averaged scores for these three metrics. Third, the first-listed classification symbol must best indicate the invention of each patent and later ones offer additional information regardless of their relative positions (i.e., the meaning of a symbol differs by its order listed). To the best of our knowledge, however, this guide has not been reflected in known previous studies on multilabel patent classification \citep{lim2017ipc,li2018deeppatent,yadrintsev2018fast,lee2020patent,haghighian2022patentnet}. They considered two different patents having classification symbols [G06Q, G06Q, A01B], [A01B, G06Q, A01B] as identical, one-hot encoding their labels as [A01B: 1, G06Q: 1]. This skewed labeling no longer provide classifiers with the correct evaluation criteria. To fix this problem, we performed simple relabeling that conserves the order of class labels (see Section \ref{sec5.3}) and compared the experimental results with our own baselines. Section \ref{sec5.5} offers the equations of the above three metrics used in the following experiments.

\section{Methodology}
Before explaining the detailed implementation of elementwise embedding, we define mathematical notations to be used in this section. We denote the sequence of $u$ semantic units (i.e., the given text), as an embedding matrix $e_{u}\in\mathbb{R}^{u\times w}$ and its $i^{th}$ row (e.g., the $i^{th}$ word in a sentence) by $e^{(i)}$. The $j^{th}$ character in each $i^{th}$ semantic unit (e.g., the $j^{th}$ letter of $i^{th}$ word) is denoted by $e^{(i)[j]}\in\mathbb{R}^{1\times c}$, where $c=w/v$. Focus embeddings $f^{(i)}\in\mathbb{R}^{1\times w}$ (local) and $g^{(p)}\in\mathbb{R}^{1\times c}$ (global) are added to
$e^{(i)}$ and $e^{(i)[j]}$ respectively, by elementwise addition $\oplus$, where $p=(i\times j) + j$. Operator $\mathrm{Reshape}_{(a\times b)}$ reshapes the given tensor to be of the shape $(a\times b)$.

\subsection{Elementwise Embedding}
This section describes the detailed implementation of elementwise embedding, the technique for elementwise language representation. Consider a word with a missing spelling \emph{App\_le}. Missing spelling has low entropy since we can easily infer that it will be "\emph{l}". In the case of a sentence with a missing word "\emph{\_ brought a basket of apples to the front yard.}", the entropy of missing word becomes higher since its spellings vary depending on what word the subject becomes. For the same reason, the entropy of a missing sentence in a paragraph will skyrocket. Based on this intuition, we can assume that the entropy is lowest at the character-level and grows in higher semantic levels; the entropy of a semantic unit is proportional to the number of its spellings.
\begin{theorem*}
For a $v$ letters semantic unit $x$, $\mathrm{H(x)}\propto v$
\end{theorem*}
where $\mathrm{H}$ is the \emph{Shannon entropy} $\mathrm{H(x)}=-\mathbb{E}_{\mathrm{x}\sim p}[\mathrm{log}P(x)]$ and $v\in\mathbb{R}$. Because low entropy means there are fewer cases to encode, it is natural to represent a character as a much lower-dimensional latent embedding. Assuming a character as one of UTF-8 bytes
\footnote{While only ASCII characters can be expressed in 1 byte, we used this analogy for ease of explanation.} and each semantic unit (i.e., a token) consists of $v$ characters, a neural network with $w$-dimensional hidden layers will have 256 $c$-dimensional character embeddings as its embedding table when $c=w/v$. A semantic unit is abstracted into a horizontal concatenation of $v$ character embeddings. Larger meanings are just concatenations of smaller ones and this hierarchical expression allows neural networks to explicitly recognize characters while learning at any complex-level of semantics. We name this pair of $(256, c)$ embedding table and the following concatenation operation as elementwise embedding, referring 256 character embeddings to \emph{elements} and their concatenated meanings to \emph{materials}.

As entropy of any $v$ letters semantic unit increases proportionally with $v$, we need a way to reduce it again. One reasonable approach is to concentrate the probabilities of $v$ spellings to several important letters. Using self-attention \citep{vaswani2017attention} with $v$ attention heads, we can align $\{e^{(i)[n]}\}_{i=1}^{u}$, the sequence of the $n^{th}$ letters of $u$ semantic units, thereby assigning higher probabilities to more important characters. It is similar to that we often catch the meanings of words by focusing only some morphologically noticeable spellings, so we call this operation \emph{focus}. By setting the number of attention heads to $h=v$, $n^{th}$ attention focuses on to $n^{th}$ important letters when ${n\in[1, v]}$. For example, when the input sequence is encoded to a sequence [\emph{Focus}, \emph{on}, \emph{the}, \emph{elements}], $1^{st}$ attention head attend to $1^{st}$ spellings [\emph{F}, \emph{o}, \emph{t}, \emph{e}] focusing on the most important letters e.g., [\emph{F}, \emph{e}], $2^{nd}$ head attends to [\emph{o}, \emph{n}, \emph{h}, \emph{l}], and so on (see Fig \ref{fig5} in Appendix A).
\begin{prop}
The entropy of an important semantic unit $e^{(i)}$ in the given text $e_{u}$ can be minimized using $v$-headed self-attention, where $e^{(i)}$ consists of horizontally concatenated $v$ $c$-dimensional character embeddings.
\end{prop}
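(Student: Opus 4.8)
The plan is to read the proposition as an information-theoretic statement about the $v$ element positions of a material and to show that $v$-headed self-attention is exactly the right mechanism to collapse the entropy contributed by each position. First I would make the probabilistic setup explicit by treating each element $e^{(i)[n]}$ as a realization of a character variable drawn from $p$, so that the material $e^{(i)}$ is the horizontal concatenation of $v$ such variables. Subadditivity of Shannon entropy then gives $\mathrm{H}(e^{(i)}) \le \sum_{n=1}^{v}\mathrm{H}(e^{(i)[n]})$, and the preceding Assumption ($\mathrm{H}(x)\propto v$) identifies the right-hand side as a sum of $v$ roughly equal per-position contributions. This step is what lets me put the $v$ element positions in one-to-one correspondence with the $v$ attention heads: each head is responsible for the entropy of exactly one position across the $u$ materials.

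Next I would analyze a single head in isolation. Head $n$ receives the sequence $\{e^{(j)[n]}\}_{j=1}^{u}$ of $n$-th elements and returns the convex combination $\sum_{j} a_{ij}\,e^{(j)[n]}$, where the weights $a_{ij}$ are the softmax of the query-key scores and satisfy $\sum_{j} a_{ij}=1$. The key observation is twofold: for fixed weights the output is a deterministic function of its inputs, so by the data-processing inequality it cannot increase the per-position entropy; and by scaling the query-key logits the softmax distribution $a_{i\cdot}$ can be driven toward a one-hot vector, whose own entropy $-\sum_j a_{ij}\log a_{ij}$ tends to its minimum. A peaked weighting collapses the combination onto the single dominant spelling of that position, so the conditional entropy of the focused element is pushed toward its floor. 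Running this argument across all $v$ heads concentrates the probability mass of each of the $v$ positions onto its most important letter.

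Finally I would aggregate: because the $v$ heads act on disjoint element positions, their per-position reductions sum, and the bound $\mathrm{H}(e^{(i)}) \le \sum_{n}\mathrm{H}(e^{(i)[n]})$ collapses once every summand is minimized. For an \emph{important} unit — one whose query elicits sharply peaked attention in every head — all $v$ weightings become one-hot simultaneously, which is exactly the claimed minimization, and I would note that the infimum is attained in the one-hot limit, recovering the stated intuition that focus concentrates distributed spelling probabilities onto a few letters. The hard part will be reconciling the two senses of entropy in play: the Assumption measures $\mathrm{H}$ over the data distribution $p$, whereas attention is a deterministic linear map on fixed embedding vectors, so the whole argument hinges on pinning down precisely which distribution the post-focus entropy is taken over and justifying that softmax concentration of the attention weights genuinely lowers that distribution's entropy rather than merely rearranging the geometry of the embeddings. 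A secondary obstacle is that the clean additive decomposition presumes near-independence of the element positions, which I would need either to assume explicitly or to replace with the subadditive bound used above.
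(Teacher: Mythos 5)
Your proposal diverges from the paper's proof in a way that exposes a genuine gap --- one you yourself flag in your closing paragraph but do not resolve, and which your machinery (subadditivity, data-processing, one-hot softmax limits) cannot resolve on its own. The quantity your argument actually minimizes is the entropy of the attention-weight distribution $-\sum_j a_{ij}\log a_{ij}$, which is not the quantity named in the proposition: the Assumption defines $\mathrm{H}$ over the data distribution $p$, and driving the query-key logits to a one-hot limit rearranges fixed embedding vectors without, by itself, changing any distribution over semantic units. Likewise the data-processing inequality only yields non-increase of entropy through a deterministic map, whereas the proposition claims minimization; and your additive aggregation across heads presumes the $v$ element positions decouple, when in fact the per-head outputs are concatenated and jointly projected, and the Assumption gives no independence structure to sum against. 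So the static, architecture-only argument establishes ``attention weights can be made peaked,'' which is strictly weaker than the claim.

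The paper closes this gap by a move entirely absent from your sketch: training. Its proof is a forward/backward-pass argument, not an information-theoretic one. The softmax in the $n$-th head is taken, by fiat, as the probability assignment over the $n$-th letters $\{e^{(i)[n]}\}_{i=1}^{u}$; the concatenation plus position-wise feed-forward projection makes the probability of each unit $e^{(i)}$ a function of its spellings $\{e^{(i)[n]}\}_{n=1}^{v}$; and then the loss from an arbitrary objective is backpropagated to each $e^{(i)[n]}$, so that the network and the element embeddings are \emph{updated} to concentrate probability on the crucial units. In other words, ``can be minimized'' is read as a statement about learnability --- the $h=v$ head structure exposes per-letter probabilities to gradient updates --- rather than about the existence of a one-hot parameter limit. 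If you want to salvage your route, you must either adopt the paper's identification of the softmax scores with the relevant probability distribution (making your one-hot limit a statement about that distribution) or add the optimization step explicitly; as written, your argument minimizes a different entropy than the one the proposition, via the Assumption, is about.
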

\begin{proof}
In forward propagation, each $n^{th}$ letter in $i^{th}$ semantic unit $e^{(i)[n]}$ is assigned a probability by softmax function in $n^{th}$ attention head, then concatenated back ($n\in[1, v], i\in[1, u]$). All $uv$ characters are jointly projected by a position-wise feed-forward layer. This allows neural networks to jointly attend once to spellings of each $e^{i}$ and once to the entire $uv$ characters in $e_{u}$. Probability of each $e^{(i)}$ is determined by the alignment of its spellings $\{e^{(i)[n]}\}_{n=1}^{v}$. Loss is computed by an arbitrary objective function and errors are backpropagated to each $e^{(i)[n]}$. Networks and elementwise embedding are updated to assign higher probability to more crucial $e^{(i)}$, based on its letters $\{e^{(i)[n]}\}_{n=1}^{v}$, so that the given objective is minimized.
\end{proof}

Note that by matching the number of attention heads with $v$, we can restrict the subspace of each attention head to the latent space of $c$-dimensional character embeddings (i.e., elements), when attention layers expect $vc=w$-dimensional embeddings (i.e., materials) as input features. It can be interpreted as approximating the $w$-dimensional latent space with closed set of $w/v=c$-dimensional vectors, and also as separating the roles of embeddings and hidden layers: embeddings to encode character-level semantics and hidden layers to encode more complex-levels of semantics. This helps neural networks do not waste their limited expressiveness on encoding implicit information. Because the transformer architecture \citep{vaswani2017attention} is itself multiple layers of attention, we do not have to implement $focus$ operation by hand, so what we need for elementwise language representation is only a single-time tensor reshaping operation from $(uv, c)$ to $(u, w)$; which is the same as concatenating $uv$ c-dimensional elements to be $u$ $w$-dimensional materials. Following this framework, the standard transformer architecture can align $uv$ characters at the $O(u^{2})$ computational complexity fully ignoring the value of $v$. The acceptable length of the input sequence scales with the size of the hidden layers so larger models can process longer sequences than the smaller ones at the same attention complexity; this is more natural than the current transformers that no matter how large the model is, the length of the sequence it can process in a reasonable amount of time does not change at all.

\subsection{Implementation Details}
\label{sec4.2}
In practical implementation, focus for aligning important elements is performed by the mutli-head attention of parent architecture (i.e., a transformer model to work with) having $h=v$ attention heads, so
elementwise embedding is just a set of lookup table containing 256 $c$-dimensional element embeddings and the following tensor reshaping operation. In other words, applying elementwise embedding is
simply to replace the existing embedding table of parent model with elementwise embedding. While token embeddings are usually trained with neural networks, they can be always detached fully
independently of the network architecture so that we do not regard replacing embedding table as a structural modification. This feature is clearly different from the previous studies on character-aware
language representation that additional computational components (e.g., non-parametric mean/min/max pooling, shallow convolutional/recurrent/transformer layers) are required for enriching character-level information or up/downsampling input sequences as leveraged in \citep{ma2020charbert,tay2021charformer,clark2022canine,godey2022manta}.

Technically, elementwise embedding is implemented by a single-time reshaping operation as,
\[e^{(i)}\leftarrow\ \mathrm{Reshape}_{(1,w)}[\{e^{(i)[j]}\oplus g^{(p)}\}_{j=1}^{v}]\oplus f^{(i)}\]
We add two kinds of position embeddings $g^{(p)}$ and $f^{(i)}$ called "focus embeddings" to $e^{(i)[j]}$ and $e^{(i)}$ respectively, to manually encode the focusable positions: the former describes global positions (e.g., position of a character $e^{(i)[j]}$ in the entire sentence $e_{u}$) and the latter directs the local positions (e.g., position of a spelling $e^{(i)[j]}$ in the $i^{th}$ word $e^{(i)}$). Though elementwise embedding works well without focus embeddings, we found that they help models trained with elementwise embedding a lot to converge more stable and to perform better as explained in Section \ref{sec6.1}. Before being passed to as input to the parent model, dropout \citep{srivastava2014dropout} and normalization \citep{ba2016layer} can be applied to embeddings for better generalization performance.

Elementwise embedding applies with any type of tokenization by following the framework:
\begin{enumerate}[topsep=0pt,noitemsep]
\item Divide given text into $u$ tokens
\item Encode each token into $v$ integers (UTF-8 bytes)
\item Project integers into a $(uv, c)$ element embedding matrix
\item Reshape element embedding matrix into a $(u, w)$ material embedding matrix
\end{enumerate}
where $w=vc$ (see Fig \ref{fig2}).

\section{Experimental Setup}
\subsection{Dataset}
All models used in the following experiments were trained on patent documents published by the USPTO (United States Patent and Trademark Office) from 2006 to 2014 and then tested on the
two test set splits 2015\textsubscript{A} and 2015\textsubscript{B} that consist of patents in the first- and second-half of 2015, respectively. We leveraged each patent document as a single training example:
its claim texts as input features and the classification symbols (i.e, CPC codes) assigned to it as labels. Among the five hierarchical levels of Section, Class, Subclass, Main Group, Subgroup, we used the slices from Section to Subclass (i.e., subclass-level CPC codes) as labels: label of the CPC code A01N 53/12 [Section: A, Class: 01, Subclass: N, Main Group: 53/00, Subgroup: 53/12] is A01N which is the 
concatenation of slice [Section: A, Class: 01, Subclass: N]. We concatenated the first 20 claims into a single input text, instead of using the first claim only as in \citet{lee2020patent} to prevent the classifier from overfitting on meaningless training examples that are too short to describe each patent's own invention. The entire number of class labels are doubled from 664 from 1,328 by the two position attributes
\emph{First} and \emph{Later} after relabeling, decreasing the standard deviation of every dataset split (see Section \ref{sec5.3}) which means that the imbalances between class labels are quite mitigated. Because we used patent data to benchmark the robustness of classification models to domain-specificity and long-tailed distribution, we did not adjust the dataset imbalance further.  We collected utility type patents only
filed in the United States as training data from the BigQuery table called Google Patents Public Data \footnote{\url{https://github.com/google/patents-public-data}}. Table \ref{table2} provides a statistical summary of the dataset. 

\begin{table}[h]
\caption{Configurations of all six models used in the following experiments. Hyperparameters $u$ and $v$ denote the number of materials and of elements per a material. $w$ and $c$ refer to the size of materials and of elements, respectively. $w$ is always a multiple of $v$ and $c$. Hyperparameter $v$ is also used for ORIG models for intuitive comparison; ORIG models represent a material using one element. $h$ denotes the number of attention heads.}
\label{table1}
\begin{center}
\begin{tabular}{llllllll}
\hline 
\multirow{2}{*}{\bf Model}          & \bf Parameters       &                  & \multirow{2}{*}{\emph{\textbf u}}   & \multirow{2}{*}{$\emph{\textbf v}$}        & \bf\multirow{2}{*}{\emph{\textbf w}}      & \bf\multirow{2}{*}{\emph{\textbf c}}     & \bf\multirow{2}{*}{\emph{\textbf h}}   \\
                                                  & \bf Total            & \bf Embedding       \\                                          
\hline 
BERT\textsubscript{EWE}        & 87M (0.8x)         & 12k (0.005x)   & 128                                 & $16 (=h)$                        & 768                            & $48 (=w/v)$                & $16 (=v)$ \\
BERT\textsubscript{ORIG}       & 110M (1x)          & 23M (1x)         & 128                                 & 1                                      & $768 (=c)$                & 768                             & 12  \\
\hline
ALBERT\textsubscript{EWE}    & 9M (0.7x)           & 2k (0.005x)     & 128                                 & $16 (=h)$                         & 128                           & $8 (=w/v)$                  & $16 (=v)$ \\
ALBERT\textsubscript{ORIG}   & 12M (1x)            & 4M (1x)           & 128                                 & 1                                      & $128 (=c)$                & 128                             & 12  \\
\hline
CANINE\textsubscript{EWE}    & 110M (0.8x)       & 1M (0.05x)      & 128                                 & $16 (=h)$                         & 768                           & $48 (=w/v)$                & $16 (=v)$ \\
CANINE\textsubscript{ORIG}   & 130M (1x)          & 25M (1x)         & 128                                 & 1                                       & $768 (=c)$                & 768                            &  12 \\
\hline 
\end{tabular}
\end{center}
\end{table}

\begin{figure}[t!]
\begin{center}
\includegraphics[width=1.0\linewidth]{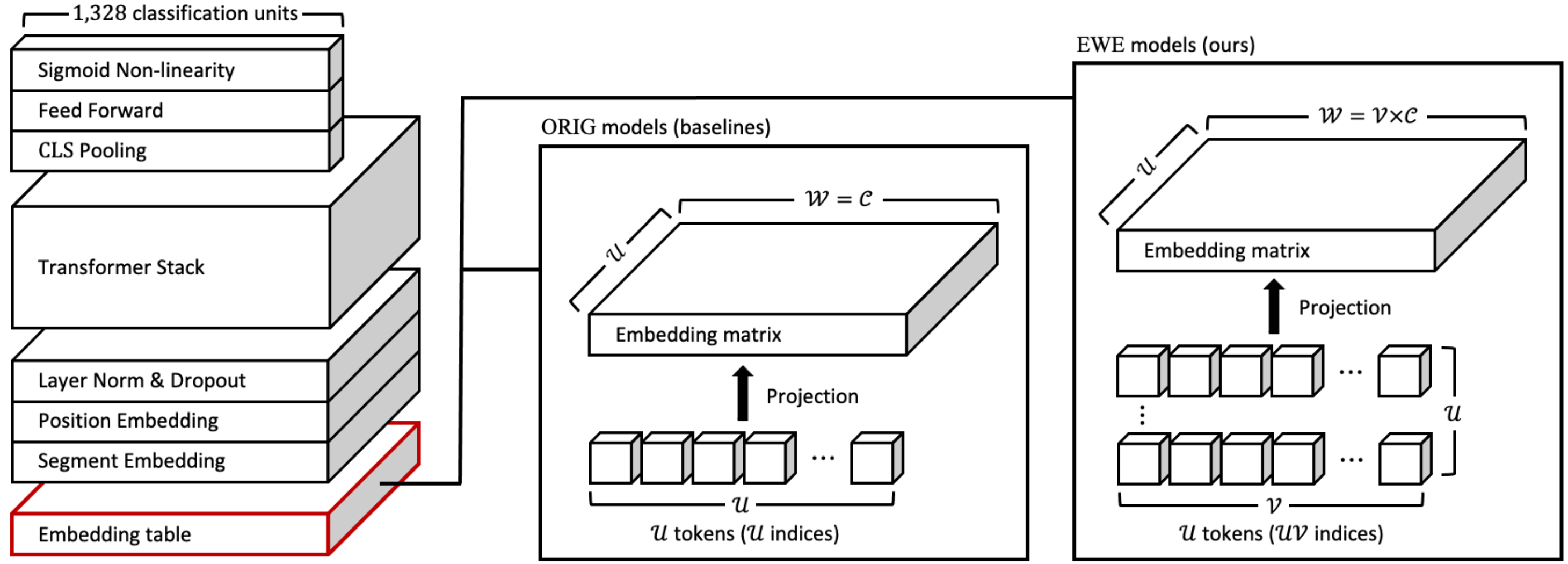}
\end{center}
\caption{Visualization of the model architectures used in the experiments. As shown in the above graphic, applying elementwise embedding is simply to replace the embedding table of any transformer-based model. Unique structures of CANINE and ALBERT and specific implementation of the standard transformer stack were omitted for visual brevity.
}
\label{fig4}
\end{figure}

\subsection{Model Architectures}
As mentioned repeatedly in previous sections, applying elementwise embedding is to replace the embedding table of the parent transformer architecture. We trained three different transformers 
BERT \citep{devlin2018bert}, ALBERT \citep{lan2019albert}, CANINE \citep{clark2022canine} with elementwise embedding for multilabel patent classification and compared with their original implementations to demonstrate the idea of elementwise language representation. We denote three transformers with elementwise embedding by EWE models such as BERT\textsubscript{EWE},
ALBERT\textsubscript{EWE}, CANINE\textsubscript{EWE} and their original versions, the baselines, as ORIG models like BERT\textsubscript{ORIG}, ALBERT\textsubscript{ORIG},
CANINE\textsubscript{ORIG}. In both theoretical and practical implementations, elementwise embedding does not modify any part of its parent model so that EWE models always have exactly the same 
architectures as their ORIG equivalences. Every model used in our experiments follow the configuration of BERT\textsubscript{BASE} \citep{devlin2018bert} wherein a transformer encoder has 12 768-dimensional attention layers with 12 heads \footnote{While EWE models use 16 attention heads, this is only to meet the theory of elementwise embedding so that is independent of their superior performances to the ORIG models as shown in the ablation study in Section \ref{sec6.2}} each followed by a 3072-dimensional linear projection; every model processes $u=128$ tokens at once but EWE models align $v$ times longer sequence than ORIG models since each token is represented by $v$ embeddings (see Fig \ref{fig4}). Table \ref{table1} shows the differences in between configurations of EWE and ORIG models.

\subsection{Patent Relabeling}
\label{sec5.3}
For the purpose of patent classification, the label of each patent document becomes a list of textual symbols classifying it to 
the corresponding technical categories. Among several classification schemes, the two most frequently utilized from previous studies are
IPC (International Patent Classification) and CPC (Cooperative Patent Classification). One interesting fact is that the assigned symbols have 
different meanings depending on the order in which they are listed: the first-listed symbol describe the invention of each patent document
and the later symbols represent additional information, so that patent documents with classifications [G06Q, G06Q, A01B], [A01B, G06Q, A01B], [G06Q, A01B]
should have three different labels. This is in accordance of  their documentations \citep{uspto,wipo}, but to the best of our knowledge, previous studies on patent classification
\citep{lim2017ipc,li2018deeppatent,yadrintsev2018fast,lee2020patent,haghighian2022patentnet} did not reflect this guide, hence one-hot encoding the above three patents to 
an identical label [A01B: 1, G06Q: 1] (see the left of Fig \ref{fig3}). This skewed labeling is quite undesirable for both practical patent classification and algorithm benchmarking since the metrics for
evaluating the classifier will be messed up by the distorted one-hot encodings.

To address this problem, we simply relabeled our patent documents by attaching position attributes \emph{First} and \emph{Later} as prefixes to their classification symbols.
We relabeled the above classifications [G06Q, G06Q, A01B], [A01B, G06Q, A01B], [G06Q, A01B] as [First-G06Q, Later-G06Q, Later-A01B], [First-A01B, Later-G06Q, Later-A01B],
[First-G06Q, Later-A01B] so that to be one-hot encoded as [0, 1, 1, 1], [1, 0, 1, 1], [0, 1, 1, 0]; when the placeholder for one-hot encoding is [First-A01B, First-G06Q, Later-A01B, Later-G06Q] (see the right of Fig \ref{fig3}). Relabeled symbols fully satisfy the technical requirements for patent classification described in Section \ref{sec3.3} and furthermore, the imbalanced distribution between class labels were
significantly alleviated since the count of each classification symbols (CPC codes here) was halved by position attributes \emph{First} and \emph{Later} (See Table \ref{table2}). As a result, all our models trained on relabeled data do well both on Precision and Recall (see Table \ref{table3}), compared to the case of aforementioned studies that are biased against either score. 

\begin{table}[t!]
\caption{A statistical summary of dataset used in the experiments. \emph{Total} is the standard deviation between all CPC codes (class labels), \emph{Majors} denotes the standard deviation between CPC codes that dominate over 90\% of the entire labels in training examples. Standard deviations in all three dataset splits were significantly decreased after relabeling (see the second row of the table) which means that the imbalances between class labels were alleviated.}
\label{table2}
\begin{center}
\begin{tabular}{llllllllll}
\hline 
\multirow{2}{*}{\bf Labeling}  & \bf Train        &                                             &                       & \bf 2015\textsubscript{A} &               &                 & \bf 2015\textsubscript{B} &               &              \\
                                              & \bf Samples  & \bf Total                                & \bf Majors     & \bf Samples                     & \bf Total & \bf Majors & \bf Total                           & \bf Total & \bf Majors \\                                          
\hline
Raw                         & \multirow{2}{*}{1.9M} & 21k                                      & 35.4k   &\multirow{2}{*}{14.5k}  & 1.8k                             & 3k          & \multirow{2}{*}{15.4k}  & 1.9k              & 3.2k \\
Relabeled                &                                   &\underline{12.6k}                  & \underline{21k}  &                      & \underline{1.1k}          & \underline{1.9k} &                         & \underline{1.2k} & \underline{2k} \\ 
\hline

\end{tabular}
\end{center}
\end{table}

\begin{figure}[t!]
\begin{center}
\includegraphics[width=1.0\linewidth]{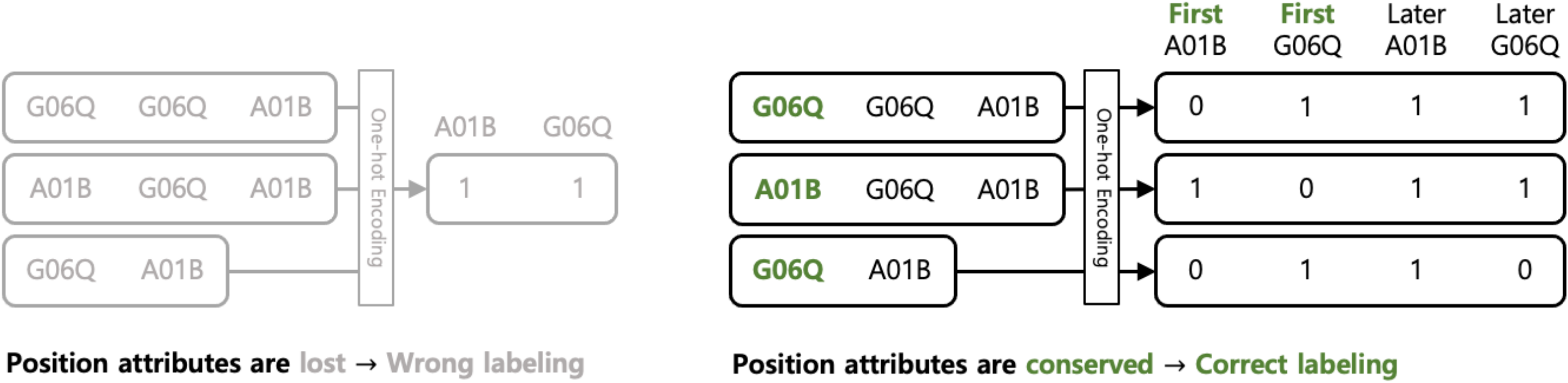}
\end{center}
\caption{Visualization of correct labeling which separates the first-listed and later classification symbols. Our proposed relabeling strategy (right) conserves the order between symbols (First- and Later-CPC codes) even after being one-hot encoded that are ignored in previous literature (left).
}
\label{fig3}
\end{figure}

\subsection{Training Details}
This section provides specific configurations to train our models. Every model was trained during 10 epochs using binary cross-entropy loss with sigmoid activation ($threshold=0.3$) for multilabel
patent classification from scratch without pretraining. We used AdamW \citep{loshchilov2017decoupled} as optimizer ($\beta_1=0.9$ and $\beta_2=0.999$, $eps=1e-8$) with $L_2$ regularization
($\lambda=0.01$). Initial learning rate decays linearly from $2e-5$ without a warmup period and batch size was set to 32; we selected these small values to prevent overfitting due to imbalanced 
training examples. Larger memory-safe batch sizes did not show meaningful differences, only slowing convergence. All models were trained and tested using a single 24GB VRAM GPU (NVIDIA RTX TITAN) with FP16 mixed-precision.

\subsection{Performance Measures}
\label{sec5.5}
For evaluating models for multilabel patent classification, we utilize three metrics Precision, Recall and their harmonic mean F\textsubscript{1} measure. Because all CPC codes (technical categories of
patents; class labels) are equally important (current technical categories draw long tail when counted by the number of the patent documents that they are classifying; only 30\% of the categories contain over
90\% of the entire patents, mainly due to the popularity of each field of technology; but all of them are potentially valuable) we compute micro-averaged scores (TP, TN, FP, FN) for all metrics by
$\mathrm{Score}=\sum\nolimits_{l} \mathrm{Score}_{l}$.
Since we consider a total of 664 CPC symbols as class labels with two position attributes \emph{First} and \emph{Later}, $l\in[1, 1328]$ is established. As mentioned in Section \ref{sec3.3},
$\mathrm{Precision=TP\cdot(TP+FP)^{-1}}$ captures how well a model identifies the unique classifications of each patent document, $\mathrm{Recall=TP\cdot(TP+FN)^{-1}}$ implies how well a model distinguishes between different patent documents which is critical for patent search engine optimization. As Precision and Recall are equally important for  patent classification, we utilize their $\mathrm{F_{1}=2\cdot(Precision^{-1}+Recall^{-1})^{-1}}$ as the main metric.

\begin{table}[t]
\caption{Superiority of transformers trained with elementwise embedding in multilabel patent classification.}
\label{table3}
\begin{center}
\begin{tabular}{llllllll}
\hline 
\multirow{2}{*}{\bf Model}          & \bf 2015A                   &                                          &                                          & \bf 2015B                   &                                     & \\
                                                  & \bf F1                         & \bf Precision                     & \bf Recall                          & \bf F1                         & \bf Precision                & \bf Recall \\
\hline 
BERT\textsubscript{EWE}        &\underline{64.30}         &\underline{66.02}               & \underline{62.66}             & \underline{63.94}       & 66.55                           & \underline{61.53} \\
BERT\textsubscript{ORIG}       &63.68                           &65.59                                  & 61.82                               & 63.35                         & \underline{67.16}         & 59.95 \\
\hline
CANINE\textsubscript{EWE}    &\underline{64.30}         &\underline{65.86}               & \underline{62.82}             & \underline{63.95}       & \underline{66.43}          & \underline{61.64} \\
CANINE\textsubscript{ORIG}   &60.40                           &64.08                                  & 57.12                               & 59.97                         & 64.52                            & 56.01 \\
\hline
ALBERT\textsubscript{EWE}    &\underline{63.18}         &\underline{65.84}               & \underline{60.73}             & \underline{62.91}       & \underline{66.47}          & \underline{59.71} \\
ALBERT\textsubscript{ORIG}   &63.15                           &65.82                                  & 60.70                               & 62.79                         & 66.36                            & 59.59 \\
\hline 
\end{tabular}
\end{center}
\end{table}

\section{Results}
This section presents the experimental results demonstrating the validity of the proposed method explained so far. Because there are no comparable state-of-the-arts in multilabel patent classification that comply the guide to the position attributes (see Section \ref{sec3.3}), we compare the results with our own baselines. Every EWE model was trained using whitespace tokenizer for a fair comparison with subword-level models (BERT\textsubscript{ORIG} and ALBERT\textsubscript{ORIG}); we analyze tokenization-free EWE models in ablation study in Section \ref{sec6.2}. As shown in Table \ref{table3}, all EWE models surpass their corresponding baselines in multilabel patent classification on all test-set splits 2015\textsubscript{A} and 2015\textsubscript{B}. Used patent dataset is highly imbalanced and contain massive amounts of unusual technical lexicons, so that the superior classification performances of EWE models on it show clear robustness to domain-specificity and long-tailed distributions improved by elementwise language representation. Note that all EWE models have much less embedding parameters than their original implementations.

ALBERT\textsubscript{EWE} and CANINE\textsubscript{EWE} are both improved by elementwise embedding while maintaining their own design choices: the shallow transformer layer for character-level encoding and 1D strided convolutional layers for sequence downsampling of CANINE \citep{clark2022canine}, and the factorized embedding parameterization and parameter sharing of ALBERT \citep{lan2019albert}. These empirically demonstrate the generalizability of elementwise language representation. Performance enhancement in ALBERT\textsubscript{EWE} is relatively smaller than in other EWE models that is presumably because it uses much lower-dimensional embeddings as elements $(c=8)$ than others (with $c=48$). Based on this observation, using $c$ greater or equal than 8 is recommended. CANINE\textsubscript{EWE} shows the largest improvement among all EWE models, however, its classification performance is at the same level as BERT\textsubscript{EWE} that has 40\% fewer parameters, so it is unclear whether additional sequence downsampling gives a meaningful benefit when elementwise embedding is already applied. All EWE models process $v=16$ times longer sequences than their ORIG counterparts at the same $O(N^{2})$ computational complexity. The overhead of the one-time tensor reshaping operation for elementwise language representation is negligible and it can even be removed by technical optimizations such as JIT (Just In Time) compilation, so there is no meaningful difference in inference speeds between EWE and ORIG models\footnote{Although reshaping tensors in GPU entails physical arrangements so is relatively more expensive than in CPU, the cost of reshaping from $(uv, c)$ to $(u, w=vc)$ is trivial and can be optimized well by technical tricks like asynchronous dispatch and JIT compilation of XLA; so elementwise embedding does not slower its parent transformer.}. 

\begin{table}[h]
\caption{Effect of focus embeddings on the convergence of transformers trained with elementwise embedding.}
\label{table4}
\begin{center}
\begin{tabular}{llllllll}
\hline 
\multirow{2}{*}{\bf Model}          & \multirow{2}{*}{\bf Ablation}       & \bf 2015A                   &                                        &                                          & \bf 2015B                   &                                & \\
                                                  &                                                    & \bf F1                         & \bf Precision                   & \bf Recall                          & \bf F1                         & \bf Precision           & \bf Recall \\
\hline 
\multirow{2}{*}{BERT\textsubscript{EWE}}       & None                                         &\underline{64.30}         & 66.02       & \underline{62.66}              & \underline{63.94}       & 66.55         & \underline{61.53} \\
       & Focus embeddings                                                                         & 63.22                          & \underline{66.14}      & 60.55                                & 62.91                         & \underline{66.69}         & 59.54 \\
\hline
\end{tabular}
\end{center}
\end{table}

\subsection{Effect of Focus Embeddings}
\label{sec6.1}
Originally, the main design goal of focus embedding was to stabilze the convergence of models trained with elementwise embedding. While the idea of elementwise embedding works well without focus embeddings (see Table \ref{table4}), training parent models with it was somewhat tricky; the starting point of meaningful convergence differed randomly at each training trial, and therefore the reproducibility was not ensured. Focus embeddings guarantee the stable convergence of elementwise models by explicitly encoding the global and local positions of focusable characters. We found that focus embeddings also improve classification performance of parent transformer, so set them as the default component of elementwise embedding. In the case where the positional information is supplemented 
in some other way e.g, $n$-gram as in tokenization-free BERT\textsubscript{EWE} (see Section \ref{sec6.2}) focus embedding did not provide a meaningful enhancement in both stability and performance, however, more research is needed on how their absence will affect other applications as sequence-to-sequence modeling. Table \ref{table4} shows the ablation study on focus embeddings BERT\textsubscript{EWE} classifier.

\begin{table}[h]
\caption{Effect of other tokenization on the convergence of transformers trained with elementwise embedding.}
\label{table5}
\begin{center}
\begin{tabular}{llllllll}
\hline 
\multirow{2}{*}{\bf Model}          & \multirow{2}{*}{\bf Tokenization}     & \bf 2015A                   &                                        &                                          & \bf 2015B                   &                                & \\
                                                  &                                                        & \bf F1                         & \bf Precision                   & \bf Recall                          & \bf F1                         & \bf Precision           & \bf Recall \\
\hline 
\multirow{3}{*}{BERT\textsubscript{EWE}}        & None                          &60.01         &63.77            & 56.67              & 59.80       & 64.49                    & 55.75 \\
        & Gradient                                                                                       &64.14         &65.91            & 62.45              & 63.75       & {66.41}                  & 61.29 \\
        & Whitespace                                                                &\underline{64.30}        &\underline{66.02}               & \underline{62.66}              & \underline{63.94}       & \underline{66.55}                    & \underline{61.53} \\
\hline
\end{tabular}
\end{center}
\end{table}

\subsection{Effect of Tokenization Strategies}
In this section, we generalize the idea of elementwise embedding to tokenization-free language representation. We implement two versions of tokenization-free BERT\textsubscript{EWE}: one using pure UTF-8 byte-level tokenizer\footnote{UTF-8 bytes encoding can be achieved by \texttt{list(bytes("text/to/encode", "utf-8"))} in Python3.} and one using gradient-based tokenizer. Tokenization-free elementwise language representation follows the same framework described in Fig \ref{fig2} and Section \ref{sec4.2} \footnote{For tokenization-free elementwise language representation, text is encoded to a sequence of $uv$ UTF-8 bytes, projected into a $(uv, c)$ element embedding matrix, then reshaped into a $(u, w)$ material embedding matrix directly without any tokenization.}. First, BERT\textsubscript{EWE} trained using raw UTF-8 bytes encoding (see the first row of Table \ref{table5}) shows significantly poor performance compared to one trained using whitespace tokenizer (see the third row Table \ref{table5}), but still equivalent to CANINE\textsubscript{ORIG} which is 1.3x larger. This version of BERT\textsubscript{EWE} shares the same hyperparameters with whitespace version which set $n=v=16$ and  $u=128$. 

Notably, tokenization-free BERT\textsubscript{EWE} recovers the same level of performance as one with whitespace tokenizer when trained using gradient-based tokenizer (see the second row of Table \ref{table5}). In this implementation, each element is replaced with softmax-weighted sum-pooled $v$-gram\footnote{This can be understood as the simplified version of \emph{Block scoring network} proposed in \citet{tay2021charformer}.}:
\[e^{(i)[j]}\leftarrow\sum_{j=1}^{v}\alpha_{j}e^{(i)[j]}\]  where $\alpha_{j}=\mathrm{softmax}_{j}(e^{(i)[j]}s)$ and $s\in\mathbb{R}^{c}$ is the weight vector for linear projection
$\mathbb{R}^{c} \mapsto \mathbb{R}$.
Since aggregated $v$-grams implicitly encode the positions of focusable elements, focus embeddings are not used for this model. $n=v=8$ attention heads are used. Operations for $v$-gram pooling cause overhead but is trivial, so that the resulting decrease in inference speed is negligible. This type of elementwise representation is expected to be useful for the tasks where consistent tokenization is difficult (e.g., multilingual applications which deal with heterogeneous language systems simultaneously). For large language models such as GPT-3 (Brown et al., 2020) with much wider hidden layers (so each semantic unit can have much greater numbers of characters), however, whitespace tokenization will suffice for almost all kinds of language representations.
\label{sec6.2}

\section{Discussion}
So far, we explored the new elementwise language representation from both theoretical and practical aspects.

This framework gives advantages of:
\begin{itemize}[topsep=0pt,noitemsep]
\item being generalized to every level of language representation
\item being able to process longer sequences at the same complexity as model scales
\item being able to reuse existing transformer architectures for all levels of language representation
\end{itemize}
Neither architectural modification nor additional computational overheads occured.

This framework remains several challenges that:
\begin{itemize}[topsep=0pt,noitemsep]
\item does not reflect various other linguistic components than semantics
\item has not yet proposed an optimal strategy for unsupervised pretraining
\item still requires separate embedding parameters for language representation 
\end{itemize}

This framework suggests new research directions which can be extended as follows:
\begin{enumerate}[topsep=0pt,noitemsep]
\item pretraining a single language model understanding all kinds of tokenization
\item pretraining a language model with multiple levels of semantics at the same time
\item integrating representations of all types of languages (text, image, graph, etc.) into bytes
\end{enumerate}

Recent groundbreaking studies \citep{brown2020language,raffel2020exploring,jaegle2021perceiver,reed2022generalist} have successfully demonstrated that it is possible to 
pretraining multimodal, multitasking neural networks. By expanding their contributions with these research directions, computers will finally be able to understand the world 
solely based on their native language, bytes.

\section{Conclusion}
In this paper, we proposed elementwise embedding that is a technique for generalized language representation.
To the best of our knowledge, this is the first case of computational language representation:
\begin{itemize}[topsep=0pt,noitemsep]
\item applying with all levels of tokenization strategies
\item aligning longer sequences proportional to the model size
\item reusing existing transformers for all levels of language representations
\end{itemize}
without either any architectural modification or degradation in performance and inference speed.

We expand these contributions in our follow-up studies discussing:
\begin{itemize}[topsep=0pt,noitemsep]
\item elementwise representation for other types of data as images and graphs 
\item unsupervised pretraining approach for elementwise language representation
\item language modeling method leveraging multiple levels of semantics simultaneously 
\end{itemize}

\bibliography{main}
\bibliographystyle{tmlr}

\appendix
\section{Appendix}
\begin{figure}[h]
\begin{center}
\includegraphics[width=1.0\linewidth]{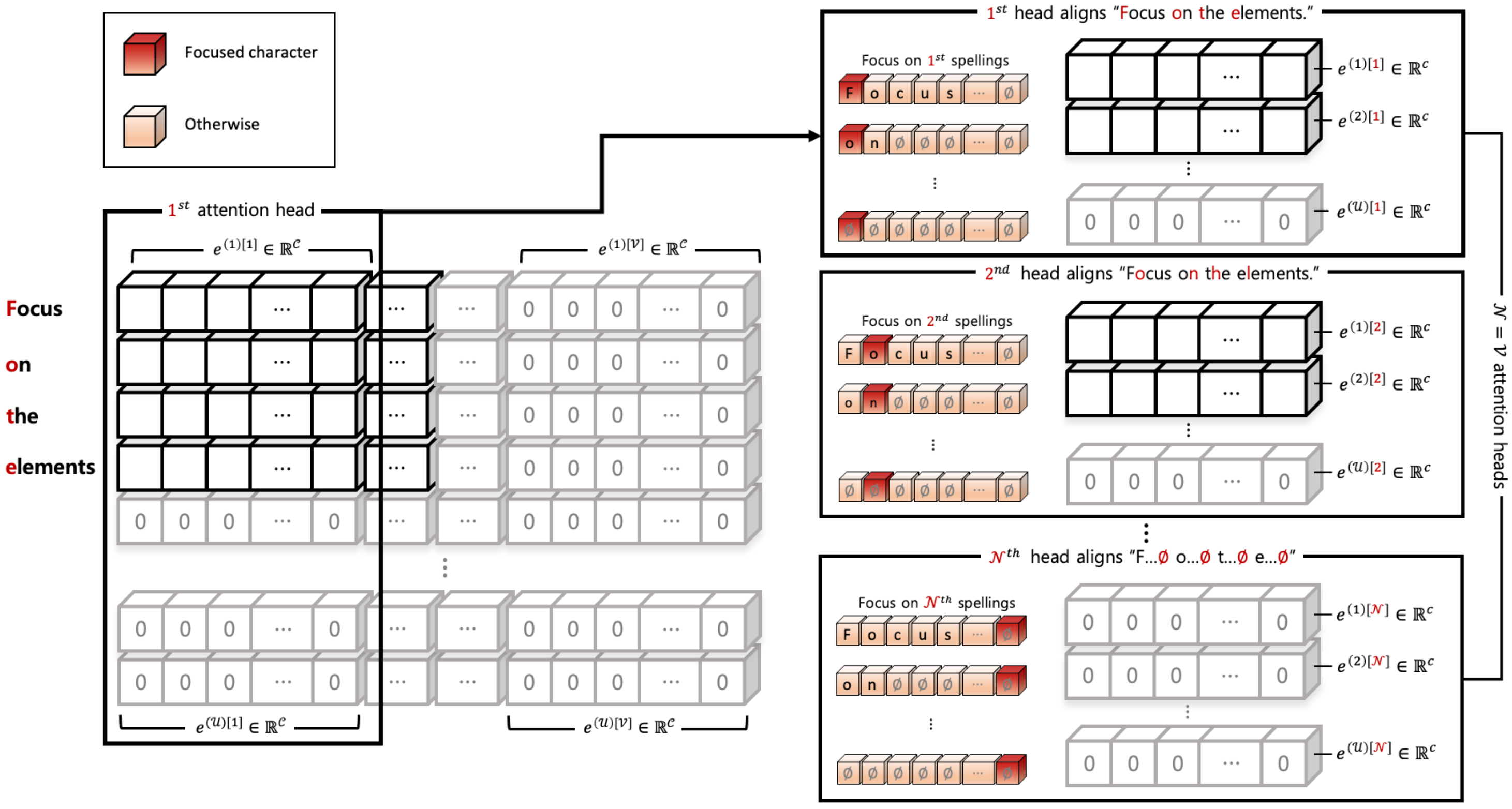}
\end{center}
\caption{Visualization of the focus operation in multihead attention. $n^{th}$ attention head aligns $n^{th}$ characters (i.e., elements) hence focusing on the most important ones, when the number of attention heads is set to $v$.
}
\label{fig5}
\end{figure}

\end{document}